%
%
%
%
%
%
%
\documentclass[%
 reprint,
 amsmath,amssymb,
 aps,
prx,
]{revtex4-2}

\usepackage{xcolor}
\usepackage{amsmath}
\usepackage{amssymb}
\usepackage{multirow}
\usepackage{pifont}
\usepackage{graphicx}
\usepackage{dcolumn}
\usepackage{bm}
\usepackage{bbm}
\usepackage{amsthm}


%
%

\newtheorem{theorem}{Theorem}[section]
\newtheorem{corollary}{Corollary}[theorem]
\newtheorem{lemma}[theorem]{Lemma}

\begin{document}

\preprint{APS/123-QED}

\title{Discovering conservation laws from trajectories via machine learning}

\author{Seungwoong Ha}
\author{Hawoong Jeong}%
\altaffiliation[Also at ]{Center for Complex Systems, Korea Advanced Institute of Science and Technology, Daejeon 34141, Korea}
\email{hjeong@kaist.edu}
\affiliation{%
  Department of Physics, Korea Advanced Institute of Science and Technology, Daejeon 34141, Korea
}%

\date{\today}

\begin{abstract}
  Invariants and conservation laws convey critical information about the underlying dynamics of a system, yet it is generally infeasible to find them from large-scale data without any prior knowledge or human insight. We propose ConservNet to achieve this goal, a neural network that spontaneously discovers a conserved quantity from grouped data where the members of each group share invariants, similar to a general experimental setting where trajectories from different trials are observed. As a neural network trained with a novel and intuitive loss function called noise-variance loss, ConservNet learns the hidden invariants in each group of multi-dimensional observables in a data-driven, end-to-end manner. Our model successfully discovers underlying invariants from the simulated systems having invariants as well as a real-world double pendulum trajectory. Since the model is robust to various noises and data conditions compared to baseline, our approach is directly applicable to experimental data for discovering hidden conservation laws and further, general relationships between variables.
\end{abstract}

\maketitle


\paragraph{Introduction}
Modern science greatly depends on the mathematical modeling of given systems and finding the internal structures between observables. One of the most important concepts in system modeling is the \textit{invariants} that underlie the system dynamics, which provide significant information about structural symmetries and low-dimensional embeddings of the system. Invariants and symmetries are fundamental building blocks of nearly all physical systems in nature, such as classical systems with Hamiltonians, Gauge orbits, and many other dynamical systems. Scientists have long attempted to identify the hidden correlations and interactions among the observables of such systems by discovering the conserved quantities and underlying symmetries. 

Recently, with the advent of large-scale data and phenomenal advances in machine learning in physical sciences \cite{carrasquilla2017machine, ch2017machine, van2017learning, zhang2017quantum, carleo2017solving,baldi2014searching, ponte2017kernel, zhang2018machine, sun2018deep, torlai2018neural, rafayelyan2020large, amey2021neural}, various studies have contributed towards the \textit{automation of science} \cite{king2009automation}, referring to current efforts to reveal scientific concepts and construct models solely from observed data without human intervention \cite{bongard2007automated, schmidt2009distilling, kaiser2018discovering, wu2019toward, li2019data, champion2019data, iten2020discovering, decelle2019learning, mototake2019interpretable,wetzel2020discovering, liu2020ai}. Following this line, several studies have attempted to accomplish the automated discovery of conserved quantities with neural networks \cite{decelle2019learning, mototake2019interpretable,wetzel2020discovering,liu2020ai}; limitations of these works though include the requirements for additional non-automated preprocessing and often a great number of datasets from different conditions, as well as the ability to only infer the number of invariants. Real-world empirical data are often sparse, noisy, and scattered into small groups, and hence a model for automated discovery needs to be robust to such harsh conditions.

\begin{figure}
  \centering
  \includegraphics[width=\linewidth]{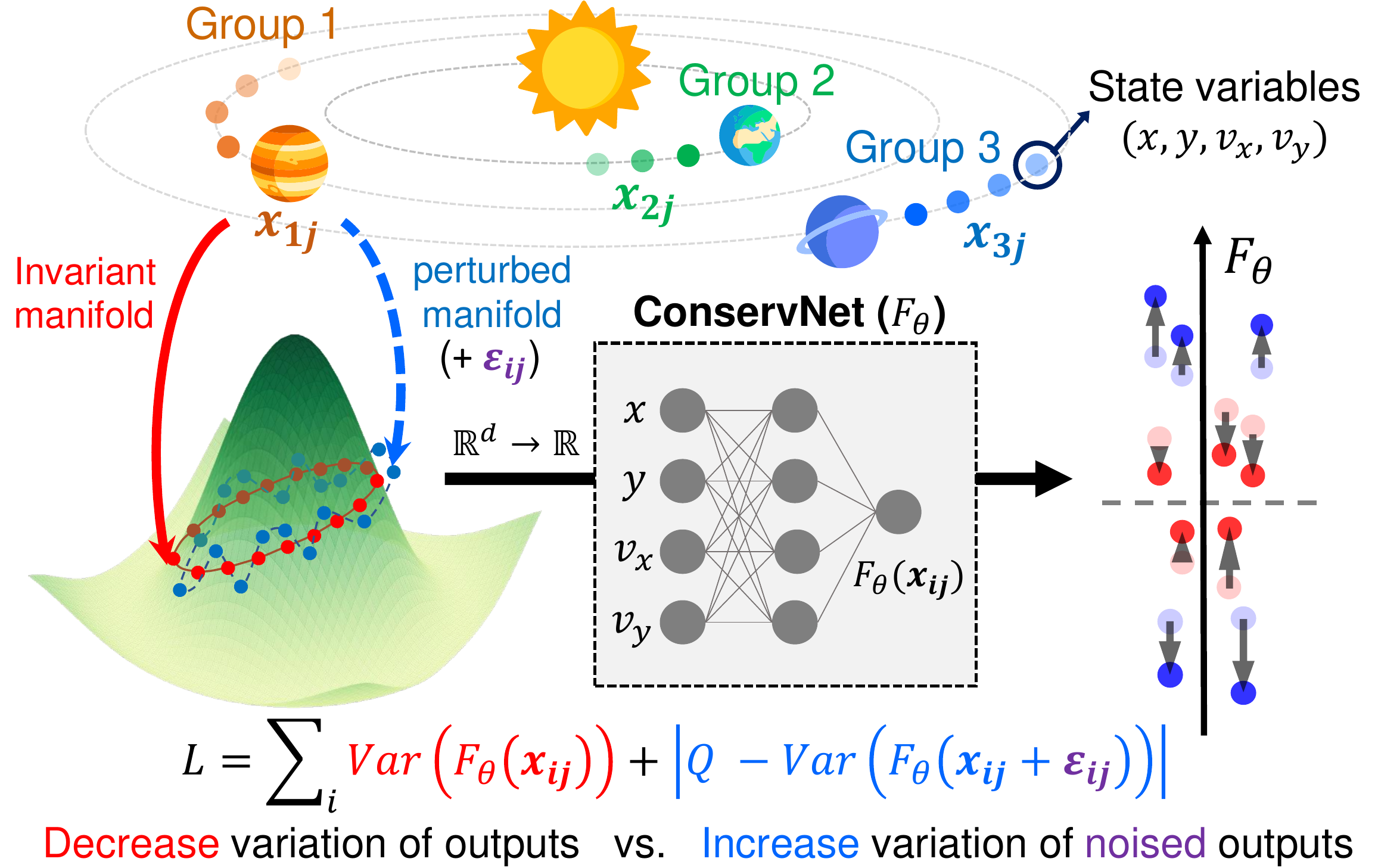} 
  \caption{Schematic overview of ConservNet and the role of noise-variance loss. Each group of data, which is a time series of planet trajectories in this example, is fed into model and optimized to minimize the noise-variance loss.}\label{fig:1}
\end{figure}

In this study, we introduce ConservNet, a neural network to discover conserved quantities in grouped data, such as trajectories, without any prior knowledge of the system. Instead of explicitly restricting the model to ensure certain symmetries, we propose a novel loss function that facilitates the model to directly learn the invariant function. We show that ConservNet robustly finds a invariant by reducing the intra-group variance of its output while preventing convergence into trivial constant functions. Our model can be applied to a variety of realistic data conditions with multiple groups, is robust to noises and nuisance variables, and employs a pipeline from raw data to invariants in an end-to-end manner that enables the direct extraction of symbolic formulas. We examine the capability of ConservNet by applying it to five model systems ranging from synthetic invariants to physical models that cover diverse functional forms, along with experimental trajectory data of a double pendulum. The robustness of our method strongly demonstrates the potential of ConservNet to be applied to real systems where data is sparse and no conservation laws are known.

\paragraph{Noise-Variance Loss}
Throughout this paper, the data condition $(N, M)$ indicates that the data is divided into $N$ groups, in which each group shares the same invariant and has $M$ data points. Our goal is to find conserved quantities hidden in such grouped $d$-dimensional data that are expected to have at least one invariant. We assume that the system has an invariant function $V$ that satisfies $V(\mathbf{x}_{ij}) = C_i$ for all $\mathbf{x}_{ij} \in G_i$, where $G_i$ denotes the $i$-th group and $\mathbf{x}_{ij} \in \mathbbm{R}^d$ is the $j$-th input data of dimension $d$ from group $i$.

In order for the model to properly approximate the invariant, it needs to satisfy two important criteria. First, the desired model should produce a ground-truth invariant $C$, or at least a value strongly correlated with the true invariant. Second, the model output from the same group should be equal in the ideal case, or at least its deviation should be minimized.

To satisfy the second criteria, the loss function $L$ for the neural output $F_{\theta}$ should decrease the intra-group variance of the outputs from each group, and thus the variance term $\mathcal{L}_{i, \text{var}} = (M^{-1} {\textstyle\sum}_j F_\theta(\mathbf{x}_{ij})^2) - (M^{-1} {\textstyle\sum}_j F_\theta(\mathbf{x}_{ij}))^2$ should be minimized. Here, the naive optimization of this loss function will generally fall into trivial minima. As an example, the whole class of simple multivariate function $ f:\mathbbm{R}^{d} \rightarrow C_0$ for any real value $C_0$ becomes one of the global minima of $\mathcal{L}_{i, \text{var}}$ since the output is constant regardless of the input. Convergence to such a trivial solution would violate the first criteria in our case.

Thus, we need to guide $F_{\theta}$ to capture a non-trivial invariant besides constant function. In this study, we inhibit trivial convergence by adopting a \textit{spreading} term that increases the variance of the output from improper input, such as perturbated input with noise. This spreading loss can be expressed as $\mathcal{L}_{i, \text{noise}} = \textrm{Var}(F_{\theta}(\mathbf{x}_{ij})) + |Q - \textrm{Var}(F_{\theta}(\mathbf{x}_{ij}+\bm{\varepsilon}_{ij}))|$, where $Q$ is the spreading constant and $\bm{\varepsilon}_{ij}$ denotes a random noise vector, which its $L_2$ norm is bounded to $R=\text{max}(||\bm{\varepsilon}_{ij}||_{2})$. Here, $Q$ restricts the absolute value of the variance of the outputs from perturbed inputs, since optimization without this constraint will lead $F_\theta$ into a diverging function, ignoring the variance minimization term. Thus, the relative scale of $Q$ and $R$ controls the fineness of the spreading. Similar intuition for spreading loss can be found in a contrastive loss in self-supervised learning \cite{sun2015deep, chen2020simple, khosla2020supervised}, which also needs to increase distance in representation space between different classes while preventing divergence. Combining two terms and summing over all groups, the loss function for ConservNet becomes 

\begin{equation}
\mathcal{L} =\sum_{i} \mathcal{L}_i = \sum_{i} \textrm{Var}(F_{\theta}(\mathbf{x}_{ij})) + |Q - \textrm{Var}(F_{\theta}({x}_{ij}+{\bm{\varepsilon}}_{ij}))|. \label{eq:1}
\end{equation}

We propose this new loss function for capturing an invariant as noise-variance (NV) loss, as schematically depicted in Fig. \ref{fig:1}. We prove that two adversarially competing terms in NV loss inhibits trivial convergence by preventing the gradient $\nabla F_\theta$ from becoming $\vec{\bm{0}} \in \mathbbm{R}^d$ \cite{supplemental}, which its implication can be physically interpreted if the system has a well-defined Hamiltonian $H$. In the language of Hamiltonian mechanics, the model aims to learn a \textit{constant of motion} $G$ with various energy levels, which is a generating function of the (infinitesimal) canonical transformation that leaves given $H$ invariant \cite{goldstein2002classical}. This implies that $\frac{dG}{dt} = \{G, H\} = \frac{\partial G}{\partial \mathbf{q}}\frac{\partial H}{\partial \mathbf{p}} - \frac{\partial G}{\partial \mathbf{p}}\frac{\partial H}{\partial \mathbf{q}} = 0$, where $\mathbf{p}$ and $\mathbf{q}$ are generalized positions and momenta. If $\nabla G = \vec{\bm{0}}$, then $\frac{dG}{dt}$ becomes zero regardless of the form of the Hamiltonian, and such $G$ represents stationary transformation which conveys no information about the system. In this sense, spreading loss thus promotes the model to learn non-trivial canonical transformation by letting the model output from the set of non-canonical transformations, namely, the perturbed trajectory cannot form a constant of motion by a margin of $Q$. 

\renewcommand{\arraystretch}{1.3}
\begin{table}\centering
\caption{Systems and invariants for verification. We use $\alpha, \beta, \delta,\gamma = (1.1, 0.4, 0.1, 0.4)$ for the Lotka--Volterra system and $m=1, GM=1$ for the Kepler problem. For the double pendulum case, the ideal Hamiltonian is given.}
  \begin{tabular}{l|c}
         System  & Invariant formula \\ \hline
         S1      & $C = x_1 - 2x_2x_3 + 3x_4^2$     \\
         S2      & $C = 3x_1 + 2\sin(x_2) + \sqrt{|x_1|}x_3^3$  \\
         S3     & $C = 2x_1x_2 - (\ln(|x_1+x_3|)-x_4)/x_3$ \\ \hline
         Lotka--Volterra &  $C = \alpha\ln(x) + \delta\ln(y) - \beta x - \gamma y $ \\ \hline 
         \multirow{3}{*}{Kepler problem} & $C_1 = xv_y - yv_x$  \\
         & $C_2 = \frac{1}{2}m(v_x^2+v_y^2) - \frac{GMm}{r}$ \\
         & $C_3 = \mathbf{p}\times\mathbf{L}-mk\hat{\mathbf{r}}$\\ \hline
         \multirow{3}{*}{Double pendulum} &  $C_{\text{ideal}} = L_1^2(m_1+m_2)\omega^2 + m_2L_2^2\omega^2$ \\
         & $ + 2m_1m_2L_1L_2\omega_1\omega_2\cos(\theta_1-\theta_2) $\\
         (experiment)& $ - 2gL_1(m_1+m_2)\cos(\theta_1)-2gm_2L_2\cos(\theta_2)$ 
         \label{table:1}
  \end{tabular}
\end{table}

\begin{figure}[t]
\centering
\includegraphics[width=\linewidth]{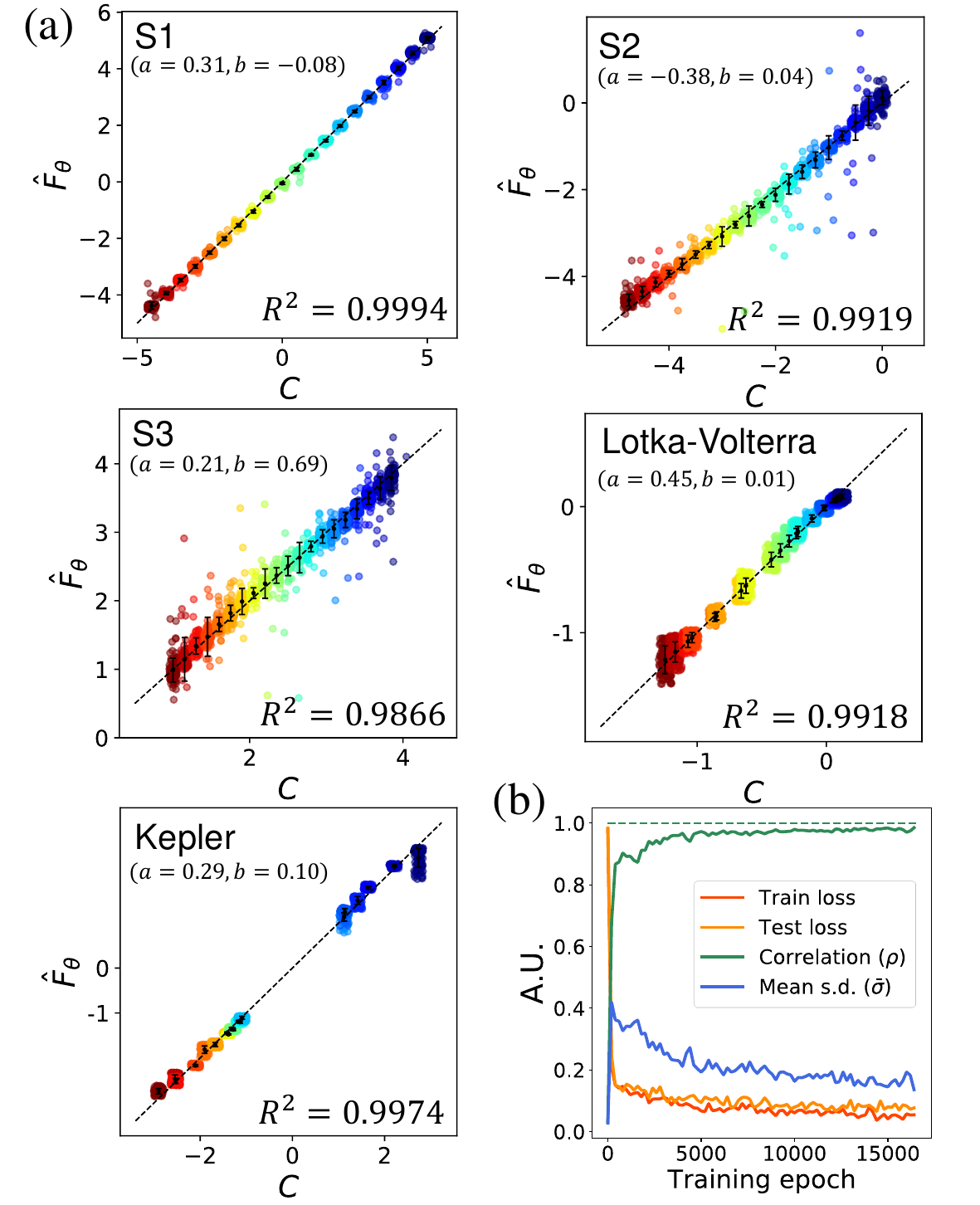}
\caption{Model performances of ConservNet. (a) Ground-truth invariants $C$ versus fitted ConservNet outputs $\hat{F}_\theta = aF_\theta+b$ for $S1$, $S2$, $S3$, the Lotka--Volterra equation, and the Kepler problem are plotted under data condition $(20, 100)$ with $R^2$ values. Points with the same color share the same invariant values but are plotted at jittered values for visualization. The mean output value of each group (black dot) with error bars for standard deviation and an identity line (dotted) drawn for comparison. (b) Result statistics for invariant $S2(20, 100)$ with ideal correlation $1$ (green, dashed).} \label{fig:2}
\end{figure}

\paragraph{Neural model construction and training}
ConservNet is a feed-forward neural network constructed with $4$ hidden layers with a layer width of $320$ neurons and a single output neuron, using Mish \cite{misra2019mish} as an activation function. Our model receives system data $\mathbf{x}_{ij}$ and produces a single scalar value $F_{\theta}(\mathbf{x}_{ij})$ that aims to approximate the mapping function from states to conserved quantities. The noise vector $\bm{\varepsilon}_{ij}$ is newly sampled from the multivariate uniform distribution at every batch with the proper scaling. In practice, we employ standard deviation $\sigma(\mathbf{x}) = \sqrt{\text{Var}(\mathbf{x})}$ instead of variance $\text{Var}(\mathbf{x})$ as a measure of variance.

As a baseline for comparison, we trained a recently proposed Siamese neural network (SNN) \cite{wetzel2020discovering} along with our model. This SNN architecture extracts an invariant by classifying whether two data points are from the same instance or not, similar to \cite{decelle2019learning}. Both ConservNet and the SNN are trained with Adam \cite{kingma2014adam} optimizer using \texttt{PyTorch} \cite{paszke2019pytorch} for $50,000$ epochs with early stoppings. For all experiments, $Q=1$ and spreading noise vector $\bm{\varepsilon}_{ij}$ is sampled from the uniform random vector with the maximum norm $R=1$ \cite{supplemental}.

\paragraph{Model systems and datasets}

In this study, the ability of ConservNet is tested with three synthetic systems, two simulated model systems, and a real double pendulum dataset from \cite{schmidt2009distilling}. The functional form of each invariant is presented in Table \ref{table:1}. Three synthetic systems $S1$, $S2$, and $S3$ are constructed to show a variety of functional forms such as cubic, trigonometric, logarithmic, and rational functions. For the Lotka--Volterra system ($\frac{dx}{dt} = \alpha x - \gamma xy$, $\frac{dy}{dt} = - \beta y + \delta xy$) \cite{takeuchi1996global} and the Kepler problem ($H_{\text{Kepler}} = \frac{\mathbf{p}^2}{2m}-\frac{GMm}{r}$), data are simulated by numerical integration with Euler's method. We find that normalizing the scale between variables improves performances, and thus variables with maximum values exceeding $10$ are rescaled by a factor of $0.1$ \cite{supplemental}.

\begin{figure}[t]
  \centering
  \includegraphics[width=\linewidth]{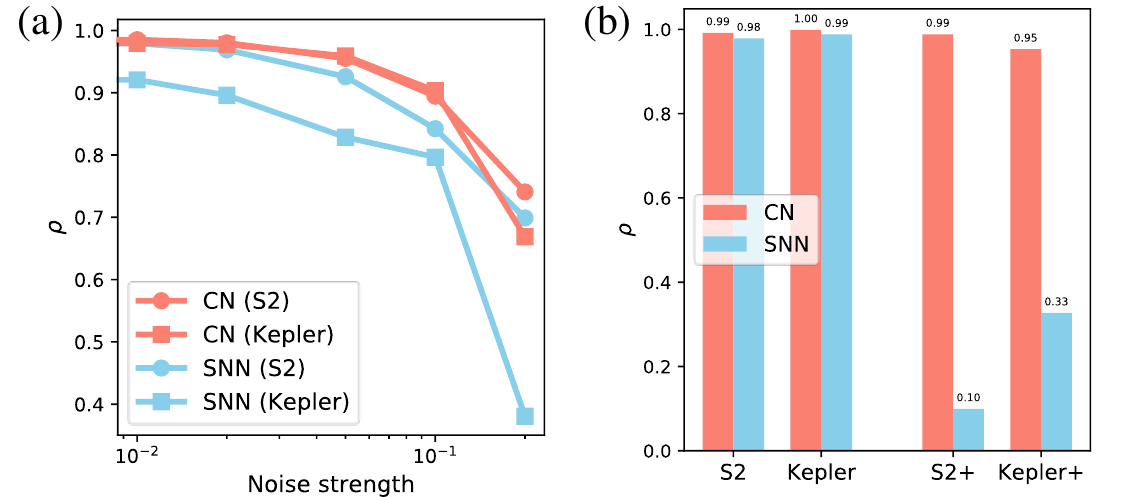}
  \caption{Robustness of ConservNet (CN). (a) Pearson correlation of ConservNet and SNN for invariant $S2$ with various noise strengths. (b) Pearson correlation of ConservNet and SNN for two original datasets ($S2$ and Kepler) and their reinforced versions ($S2+$ and Kepler$+$) that include nuisance variables not appearing in the invariants.}\label{fig:3}
  \end{figure}

  \begin{figure*}[t]
    \centering
    \includegraphics[width=\linewidth]{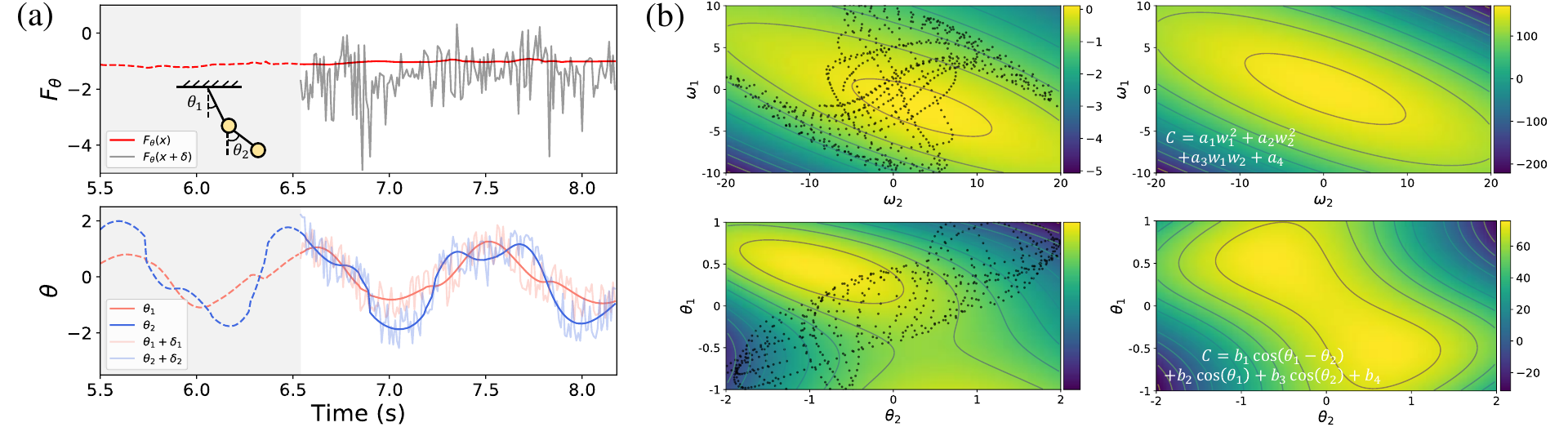}
    \caption{ConservNet results for real double pendulum data. (a) Model output $F_\theta(x)$ and the noised model output $F_\theta(x+\bm{\varepsilon})$ with $\bm{\varepsilon} = (\varepsilon_1, \varepsilon_2)$ (top), and double pendulum trajectories $\theta_1, \theta_2$ and noised trajectories (bottom) versus time. Data in the shaded area are used for training (from $0$ s to $6.54$ s), with the remaining data used for testing ($6.54$ s to $8.18$ s). (b) 2D heatmap of model output $F_\theta$ (left) and ideal Hamiltonian (right) for $(\theta_1, \theta_2, \omega_1, \omega_2) = (0, 0, \omega_1, \omega_2)$ (top) and $(\theta_1, \theta_2, \omega_1, \omega_2) = (\theta_1, \theta_2, 5, 10)$ (bottom). The training data points are scattered in the left panels, while the ideal formulas for the cross-section are presented in the right panels. Here, the ideal heatmaps are drawn with constants $(a_1, a_2, a_3, a_4 = 1, 0.32, 0.82, -170.95)$ and $(b_1, b_2, b_3, b_4 = 41, -124.13, -46,82, 57)$, provided by \cite{schmidt2009distilling}.}\label{fig:4}
    \end{figure*}

\paragraph{Results} 
We prepare $2,000$ training data with various data conditions $(N, M)$ and an equal number of test data for all simulated systems, which is notable as a small amount compared to modern deep learning and other related studies \cite{mototake2019interpretable,wetzel2020discovering,liu2020ai} that typically employ more than $10,000$ data. These conditions are addressed to replicate practical situations with high data costs and a limited number of different observations, common in physical and biological data. The code for dataset generation and model training is publicly available at \cite{code}.

The model performance of ConservNet is evaluated by the aforementioned two criteria: high correlation with the ground-truth invariant and small intra-group variance. We use Pearson correlation $\rho$ and mean intra-group standard deviation $\bar{\sigma} = \frac{1}{N}\sum \sigma_i$ for each criterion. 

Figure \ref{fig:2}(a) illustrates the notable performances of ConservNet, simultaneously finds invariants from multiple groups at once, achieving strong Pearson correlation and small intra-group variation in every model system. For the case of multiple invariants in the Kepler problem, ConservNet captures the angular momentum first and finds the energy secondly when the angular momentum is controlled (see \cite{supplemental} for analysis on multiple invariants). Figure \ref{fig:2}(b) shows result statistics of ConservNet for $S2(20, 100)$ as an example case. We can observe that our model shows smooth convergence without overfitting, while its $\bar{\sigma}$ decreases and $\rho$ approaches to $1$. ConservNet shows consistent performance for different data conditions $(N, M)$ as presented in \cite{supplemental}.

We further investigate the capability and robustness of ConservNet by applying several different conditions prevalent in experimental data. First, we check the impact of noise on the datasets by adding noise $\mathcal{N}(0, s)$ with various strengths $s$. Figure \ref{fig:3}(a) shows that ConservNet gives consistent performances under the noised condition, with better correlation compared to the baseline. ConservNet effectively increases its data size by adding new random noise to its dataset for each batch, thus shows better sample efficiency \cite{xie2020self} and performance with inherent robustness to noise \cite{ lopes2019improving}. We also find that if the data has no invariant, our model alerts it by strong overfitting \cite{supplemental}.

In a real scenario, there might be irrelevant variables in an observed dataset that do not compose the invariant. Filtering out such nuisance variables is crucial for data-driven discovery without any prior knowledge. We test our model with two reinforced datasets. First, we concatenate one extra variable $x_4 \sim \mathcal{N}(0, 1)$ to the $S2$ dataset to construct $S2+$ with a noisy variable. Second, we transform the data of the Kepler problem from Cartesian coordinates $(x, \dot{x}, y, \dot{y})$ into polar coordinates $(r, \dot{r}, \theta, \dot{\theta})$ to construct Kepler$+$. In the polar coordinates, $\theta$ becomes a cyclic coordinate and neither $\dot{r}$ nor $\theta$ appears in angular momentum $r\dot{\theta}$, different from the original Cartesian form $x\dot{y} - y\dot{x}$ where all of the state variables appear. As Fig. \ref{fig:3}(b) shows, ConservNet exhibits robust performances even with the existence of the nuisance variables and coordinate transformation, while the SNN strongly overfits and shows low performance when there are unused variables, possibly due to the nature of classifiers and the absence of a proper regularizer.

Finally, we apply our model to a real double pendulum trajectory from \cite{schmidt2009distilling}, which is a challenging task in a number of ways. According to \cite{schmidt2009distilling}, the data does not \textit{strictly} obey any conservation laws due to noise and friction. Furthermore, the model has to discover the invariant in an extreme data condition where only a single trajectory ($N=1$) with a limited number of data points ($M=654$) is available for training. Note that the SNN is \textit{inapplicable} to this case since it needs at least two groups of data to compare ($N \geq 2$).

We train our model and examine its output for stability and accuracy. Figure \ref{fig:4}a shows that ConservNet output $F_\theta$ stably remains constant for the training and test trajectory but not for the noised trajectory, verifying that ConservNet falls into neither trivial convergence nor overfitting and properly captures the functional form of the invariant. We further check two-dimensional cross-sections of the model output by fixing two variables and varying two variables, and compare them with the cross-sections of the ideal four-dimensional Hamiltonian, constructed with the constants from \cite{schmidt2009distilling}. The results are shown in Fig. \ref{fig:4}b. Considering inherent frictions and the restricted regions of the data points, both heatmaps are similar enough to the point where the inference of the abstract functional form is possible. To summarize, ConservNet successfully captured the conserved quantity from a real double pendulum system with extreme data conditions.

\paragraph{Conclusion and Outlook}

In a real practice where the ground-truth invariant is unknown, we may identify the symbolic form of the invariant by sorting the output values and employing off-the-shelf polynomial regression or symbolic regression algorithms. We illustrate a result of such application for invariant $S1$ as an example in \cite{supplemental}, in which the ground-truth symbolic formula was successfully retrieved.

One limitation that ConservNet shares with \cite{wetzel2020discovering} is that the single model finds a single invariant even if the system could have multiple invariants. While we showed that training with modified data leads to the discovery of remaining invariants, such modification is usually difficult for experimental data. Since our model identifies the numerical value of the invariant and \cite{liu2020ai} approximates the number of invariants, unifying the advantages of these approaches would be an interesting future direction to be explored.

In this letter, the invariants in a classical sense such as a well-defined Hamiltonian is mainly discussed. We can further expand the scope of ConservNet, for instance, by converting any non-autonomous system of $x$ with multiple exogenous variables $y_1, y_2, \dotsc, y_n$ to a standard form of $f(\frac{dx}{dt}, x, y_1, y_2, \dotsc, y_n) = 0$, a system with invariant of value zero \cite{kloeden2011nonautonomous}. In this perspective, one can identify interactions among variables by discovering invariants with the proposed model. This opens a wide variety of potential applications of the model in academic disciplines where the underlying dynamics are yet to be discovered, including advanced domains of quantum mechanics \cite{hioe1981n, aharonov2021conservation}, high-energy physics \cite{christ1980conservation}, astronomical science \cite{abbott2016observation} and particle physics \cite{aad2012observation}, which the scale of dataset is exceedingly large that finding any meaningful structure is humanly intractable.

Automation of science with deep learning is a recently emerging field of study with plenty of uncharted research areas. The present work builds an interpretable bridge across the data and scientists by extracting significant information from entangled high-dimensional data as a form of numerical value and symbolic equation, which can be further explained by a physicist. We envision that at some point, a neural network as ConservNet or an integrated framework of such networks would automatically discover truly unseen knowledge from large-scale datasets.

\begin{acknowledgements}
  This research was supported by the Basic Science Research Program through the National Research Foundation of Korea NRF-2017R1A2B3006930.
\end{acknowledgements}

\appendix
\section{Dataset construction}
We prepare total six systems for training: $S1$, $S2$, $S3$, the Lotka--Volterra system, the Kepler problem, and observation data from a real double pendulum. Table S1 in the SI appendix shows the exact range and sample distribution of each variable in every model system. 

\subsection{Synthetic systems}
The datasets are composed by first randomly drawing the relative variables except for the final one, and calculating the last relative variable which preserves the overall conserved quantity. We tried to maximize the variety of simulated data by setting the noise distribution and variable range for each system as differently as possible. While producing each dataset, we restricted the absolute value of the output of the final variable, consequently rejecting some perilous set of variables that forces the last variable into an extremely diverging value.

\subsection{Physical systems}

We generate the data from physical systems by integrating respective differential equations with Euler's method and performing subsampling to the trajectories. For the Lotka--Volterra system, we simulate the dynamics for $100M$ steps for the dataset of batch size $M$ with time interval $dt=0.01$. The obtained data are further subsampled at every $100$ steps, effectively setting the time interval between data points to $1$. We scale $x, y$ in the Lotka--Volterra equation and the position coordinates $x, y$ in the Kepler problem by a factor of $0.1$.

\subsection{Real double pendulum}
Double pendulum data is adopted from \cite{schmidt2009distilling}, where two trials of double pendulum data are available. We use the first trial, consisting of $818$ data points with four-dimensional time-series $(\theta_1, \theta_2, \omega_1, \omega_2)$. Each data point corresponds to $0.01$ s, making the total data length $8.18$ s. The training set consists of the first $654 = 818 \times 0.8$ points, and the test set consists of the remaining $154 = 818 \times 0.2$ points. We scale $\omega_1, \omega_2$ by a factor of $0.1$ to match the $\theta_1$ and $\theta_2$ scale.

\section{Model training}
Both the SNN and ConservNet are composed of six layers of multi-layer perceptrons (MLPs) with a layer width of $320$, where the input dimension varies by the target system and has a single output neuron. Note that the original SNN \cite{iten2020discovering} used two layers of MLPs with a layer width of $160$; we found that increasing the layer depth and width generally increased the overall performances for both SNN and ConservNet. 

During training, we found the training result of SNN significantly varies by random initialization and is highly prone to overfitting. We report that the SNN shows a good performance (training and test accuracy of $100\%$ and $99.75\%$ with correlation $0.998$) in one trial but converges to a meaningless output with strong overfitting (training and test accuracy of $100\%$ and $50.03\%$ with correlation $0.154$) in the very next trial with the same conditions. In several trials, a larger layer width ($320$) exhibits strong overfitting while a relatively shallow one ($160$) shows better generalization performances.

For a fair comparison, we test every combination of learning rates $[0.005, 0.0005, 0.00005]$ and layer widths $[160, 320]$, and report the best performing one (in terms of test accuracy) among five trials for each condition as representative results of the SNN. In the case of ConservNet, we fix the learning rate at $0.00005$ and layer width at $320$ neurons since the performance was robust against both layer width and learning rate. We train both the SNN and ConservNet for $50,000$ epochs with early stopping, Adam\cite{kingma2014adam} optimizer, and no particular regularizer. The batch size for mini-batch training is fixed to $64$ for the SNN and tentative for Conservnet, where its batch size is fixed as respective group size $M$. Training takes several minutes to several hours on a single GeForce GTX 1080, depending on the batch size and early stopping condition.

\section{Proof for Noise-Variance Loss}

\subsection{Simple loss and its limitation}
In this study, we assume that the dataset consist of $N$ groups with group size $M$, and have a meaningful conserved quantity $V$ that satisfies $V(x_{ij}) = C_i$ for all $x_{ij} \in G_i$ where $G_i$ denotes $i$th group.

One may construct a simple loss function with a variance-decreasing term only, such as 

\begin{equation}
  \mathcal{L}_{\text{simple}} =\sum_{i} \mathcal{L}_i = \sum_{i} \textrm{Var}(F_{\theta}({x}_{ij})) \label{eq:1}
\end{equation}

where $x_{ij}$ is an input data. It is simple to show that by performing gradient descent on $\mathcal{L}_{\text{simple}}$ for the network parameters $\theta$, the output from the same group will approach the same constant value. 

\begin{theorem}
  The global minimum of the functional $\mathcal{L}_{\text{simple}}$ is $F_\theta(x_{ij}) = C_i$ with some constant $C_i$ for all group. \label{thm:1}    
\end{theorem}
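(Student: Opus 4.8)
The plan is to reduce the statement to the elementary fact that an empirical variance is non-negative and vanishes exactly when all of its arguments coincide. First I would rewrite each per-group term in its mean-deviation form, introducing the group mean $\bar{F}_i := M^{-1}\sum_j F_\theta(x_{ij})$ so that
\[
  \textrm{Var}(F_\theta(x_{ij})) = \frac{1}{M}\sum_j \bigl(F_\theta(x_{ij}) - \bar{F}_i\bigr)^2.
\]
Since this is an average of squares, every summand is non-negative, and summing over groups gives the lower bound $\mathcal{L}_{\text{simple}} = \sum_i \textrm{Var}(F_\theta(x_{ij})) \geq 0$ immediately.

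Next I would verify that the value $0$ is actually attained, so that it is the \emph{global minimum} and not merely a lower bound. Choosing any configuration for which $F_\theta(x_{ij}) = C_i$ for every $j$ in group $i$ makes each group mean equal to $C_i$, so every squared deviation vanishes and $\mathcal{L}_{\text{simple}} = 0$. For the converse characterization of minimizers I would argue that if $\mathcal{L}_{\text{simple}} = 0$, then, being a sum of non-negative terms, each group variance must individually be zero; a vanishing sum of squared deviations forces every deviation to be zero, i.e. $F_\theta(x_{ij}) = \bar{F}_i$ for all $j$, and setting $C_i := \bar{F}_i$ recovers exactly the claimed group-wise constant form.

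The computation is entirely routine, so I do not expect a genuine obstacle; the only point requiring care is conceptual rather than technical, namely distinguishing the optimal \emph{value} of the functional from its \emph{set of minimizers}. The theorem asserts that the minimizing configurations are precisely the functions that are constant on each group, and the non-negativity-plus-attainability argument pins this down completely. The one place where the phrasing ``global minimum of the functional'' could be sharpened is whether such a $\theta$ is realizable within the fixed network architecture rather than in the space of all functions; since the training set is finite, the existence of parameters interpolating the group-wise constant values is immediate from the network's expressivity, so this causes no difficulty.
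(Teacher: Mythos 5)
Your proof is correct, but it takes a different route from the paper's. The paper treats the per-sample outputs $F_{\theta,ij}$ as free optimization variables and does calculus on them: it sets $\partial \mathcal{L}_{\text{simple},i}/\partial F_{\theta,ij} = \frac{2}{M}(F_{\theta,ij}-\mu_i^F) = 0$ to find that the unique stationary configuration has every output equal to the group mean, then applies a second-derivative test ($\frac{2}{M}(1-\frac{1}{M})>0$) to conclude this stationary point is the global minimum. You instead avoid calculus entirely: variance is an average of squares, hence $\mathcal{L}_{\text{simple}}\geq 0$, the value $0$ is attained exactly by group-wise constant configurations, and a vanishing sum of squares forces every deviation to vanish, which characterizes the minimizer set completely. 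Your argument is more elementary and, strictly speaking, more airtight: a positive diagonal second derivative at a stationary point does not by itself certify a \emph{global} minimum (one needs convexity of the quadratic form, which the paper uses implicitly but never states), whereas non-negativity plus attainment settles globality with no such gap. What the paper's derivative computation buys in exchange is reuse: the same expansion of $\partial\,\mathrm{Var}/\partial F_{\theta,ij}$ is the workhorse of the subsequent proof of Theorem 2 on the noise-variance loss, where the sign-flipping term makes a pure non-negativity argument unavailable. Your closing remark about realizability within the network architecture is a sensible sharpening; the paper handles the same point in the opposite direction via its corollary, observing that constant functions (which any feed-forward network can express) satisfy the minimum condition.
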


\begin{proof}[Proof of Theorem \ref{thm:1}]
  Neural model $F_\theta$ tunes the output by optimizing $F_{\theta, ij} = F_\theta(x_{ij})$ through the network parameter $\theta$. The condition of $\mathcal{L}_{\text{simple},i}$ for the stationary point becomes
  \begin{equation}
    \frac{\partial \mathcal{L}_{\text{simple},i}}{\partial F_{\theta, ij}} =\frac{\partial}{\partial F_{\theta, ij}} \textrm{Var}(F_{\theta, ij})) = 0\label{eq:2}
  \end{equation}
  
  for all $F_{\theta, ij}$. By expanding the variance function with $\mu_{i}^F = \frac{1}{M}\sum_j{F_{\theta, ij}}$, we get

  \begin{align}
    \frac{\partial \mathcal{L}_{\text{simple},i}}{\partial F_{\theta, ij}} & = \frac{1}{M} \frac{\partial}{\partial F_{\theta, ij}} \sum_j (F_{\theta, ij} - \mu_{i}^F)^2 \\ \nonumber 
    & = \frac{1}{M} \sum_j 2(F_{\theta, ij}-\mu_i^F)\frac{\partial F_{\theta, ij} - \mu_i^F}{\partial F_{\theta, ij}} \\ \nonumber
    & = \frac{2}{M}(F_{\theta, ij} - \mu_i^F) - \frac{2}{M^2}\sum_j (F_{\theta, ij}-\mu_i^F) \\ \nonumber
    & = \frac{2}{M}(F_{\theta, ij} - \mu_i^F) = 0 \label{eq:3}
  \end{align}

  since $\sum_j(F_{\theta, ij}-\mu_i^F) = 0$. This means $F_{\theta, ij} = \mu_i$ at the only stationary point, indicates that every $F_{\theta, ij}$ is a constant $C_i = \mu_i^F$.
  Also, by checking its second derivative, we get

  \begin{align}
    \frac{\partial^2 \mathcal{L}_{\text{simple},i}}{\partial F_{\theta, ij}^2} & = \frac{\partial} {\partial F_{\theta, ij}} \frac{2}{M}(F_{\theta, ij} - \mu_i^F) \\ \nonumber 
    & = \frac{2}{M}\left(1-\frac{1}{M}\right) > 0 \label{eq:4}
  \end{align}

  since $M$ is a natural number. Hence, this stationary point is a global minimum.
  \begin{corollary}
    The constant function $F_{\text{const}}(x) = C_0$ for any $x \in \mathbbm{R}^d$ satisfies the condition for the global minimum of the functional $L_{\text{simple}}$.
  \end{corollary}

\end{proof}

Obviously, the intra-group variance will be $0$ if all of the model output from the same group becomes the same constant. But the zero intra-group variance is not a sufficient condition for a meaningful invariant, as mentioned in the main manuscript. Any modern deep learning architecture with perceptrons and feed-forward network (includes ConservNet) can express the constant function by reducing the weight to zero and thus ignoring input completely, and hence prone to learn such simple function rather than meaningful invariant. In Fig. \ref{fig:5}, we can see that the model trained by $\mathcal{L}_{\text{simple}}$ falls into this trap; does not properly capture the given invariant and instead shows nearly constant behavior, even though the train and test loss rapidly converged to zero in the early stage of training. 

\subsection{Noise variance loss}
Now, we focus on the proposed Noise-Variance Loss (NV loss) for ConservNet. 

\begin{equation}
  \mathcal{L} =\sum_{i} \mathcal{L}_i = \sum_{i} \underbrace{\textrm{Var}(F_{\theta}({x}_{ij}))}_{\mathcal{A}_i} + |\underbrace{Q - \textrm{Var}(F_{\theta}({x}_{ij}+{\varepsilon}_{ij}))}_{\mathcal{B}_i}|, \label{eq1}
\end{equation}

where $Q$ is a spreading constant and ${\varepsilon}_{ij}$ denotes a spreading noise vector. The function consists of the same term as $\mathcal{L}_{\text{simple}}$ ($\mathcal{A}_i$) and the additional term that keeps the variance of noised output into a certain value ($\mathcal{B}_i$). We want to show that the minimization of this loss function will avoid trivial convergence by flipping its behavior when noised output variance became too low.

\begin{theorem}
  The constant function $F_{\text{const}}(x) = C_0$ for all $x \in \mathbbm{R}^d$ and some constant $C_0$ is not a minima of $\mathcal{L}_i$. \label{thm:2}
\end{theorem}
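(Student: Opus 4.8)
The plan is to show that $F_{\text{const}}$ sits strictly above a nearby competitor, so that it cannot be even a local minimum of $\mathcal{L}_i$. First I would evaluate the loss at the constant function itself: since $F_{\text{const}}(x_{ij}) = C_0$ and $F_{\text{const}}(x_{ij} + \varepsilon_{ij}) = C_0$ for every $j$, both variance terms vanish, giving $\mathcal{A}_i = 0$ and $\mathcal{B}_i = Q - 0 = Q$, hence $\mathcal{L}_i(F_{\text{const}}) = Q > 0$. The idea is then to exhibit an explicit one-parameter deformation $F_t(x) = C_0 + t\,\phi(x)$, with $\phi$ a nonconstant function realizable by the network (a linear map $\phi(x) = a^{\top} x$ already suffices), for which $\mathcal{L}_i(F_t) < Q$ for all sufficiently small $t \neq 0$.

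The central computation is that both variance terms are homogeneous of degree two in $t$. Writing $\Sigma$ for the empirical covariance of the clean points $\{x_{ij}\}_j$ and $\Sigma'$ for that of the noised points $\{x_{ij} + \varepsilon_{ij}\}_j$ within group $i$, a linear $\phi$ gives $\mathcal{A}_i(t) = t^2\, a^{\top}\Sigma\, a$ and $\mathrm{Var}(F_t(x_{ij}+\varepsilon_{ij})) = t^2\, a^{\top}\Sigma'\, a$. For $|t|$ small the noised variance stays below $Q$, so the absolute value in $|\mathcal{B}_i|$ opens with a negative sign and the loss becomes
\begin{equation}
\mathcal{L}_i(F_t) = Q + t^2\,a^{\top}(\Sigma - \Sigma')\,a .
\end{equation}
It therefore suffices to produce a single direction $a$ with $a^{\top}(\Sigma' - \Sigma)\,a > 0$, i.e.\ a direction in which the perturbing noise strictly inflates the spread of the outputs; any such $a$ forces $\mathcal{L}_i(F_t) < Q$ and thereby defeats the constant function.

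The main obstacle is guaranteeing that such a direction always exists, which is exactly where the nondegeneracy of the spreading noise enters. Because $\varepsilon_{ij}$ is drawn independently of the data and sampled freshly at each batch, the relevant object is the expected noised covariance, and for a zero-mean, full-support noise one has $\mathbb{E}[\Sigma'] = \Sigma + \Sigma_{\varepsilon}$ with $\Sigma_{\varepsilon} \succ 0$; choosing $a$ to be any eigenvector of $\Sigma_{\varepsilon}$ then yields $a^{\top}(\Sigma' - \Sigma)\,a > 0$. I would close the argument by noting that this strict drop below $Q$ holds uniformly for small $t$, so $F_{\text{const}}$ is not a local, and \emph{a fortiori} not a global, minimum of $\mathcal{L}_i$. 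The delicate point to state carefully is whether the claim is phrased for the expected loss or for a fixed noise realization: since $\Sigma_{\varepsilon} \succ 0$ the inequality is strict in expectation and persists for typical realizations once $M$ is not too small, which is precisely the regime in which the loss is actually optimized.
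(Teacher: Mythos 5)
Your proposal is correct, but it takes a genuinely different route from the paper's proof. The paper argues via stationarity in function space: it differentiates $\mathcal{L}_i$ with respect to the outputs $F_{\theta,ij}$, Taylor-expands the noised term as $F_{\theta,ij}+\nabla F_{\theta,ij}\cdot\bm{\varepsilon}_{ij}$, splits into cases on $\text{sgn}(\mathcal{B}_i)$, proves a lemma that the constant function is a global \emph{maximum} of $\mathcal{L}_i$ in the regime $\text{sgn}(\mathcal{B}_i)=1$, and then closes by contradiction, since a constant function forces $\mathcal{B}_i = Q - 0 = Q > 0$ and hence lands precisely in that regime. You instead give a direct competitor argument: evaluate the loss at the constant function exactly ($\mathcal{L}_i(F_{\text{const}})=Q$), deform along a linear family $F_t = C_0 + t\,a^{\top}x$, and show the loss drops strictly below $Q$ for small $t\neq 0$ because the noise inflates the output covariance, $\mathbb{E}[\Sigma'] = \Sigma + \tfrac{M-1}{M}\Sigma_{\varepsilon}$ with $\Sigma_{\varepsilon}\succ 0$. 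What your approach buys is rigor and economy: the linear probe makes both variance terms exact (no Taylor truncation, no informal treatment of $F_{\theta,ij}$ and $\nabla F_{\theta,ij}\bm{\varepsilon}_{ij}$ as independent coordinates, which is the shakiest part of the paper's derivation), and exhibiting a strictly better nearby function is literally what ``not a minimum'' demands; you even quantify the gap. What the paper's approach buys is the interpretive payoff used in the main text: the case analysis shows the spreading term acts by forbidding $\nabla F_\theta = \vec{\bm{0}}$, which feeds the Hamiltonian-mechanics discussion. The one caveat in your argument --- that a fixed finite-sample noise realization could conspire to give $\Sigma' \preceq \Sigma$, so the strict decrease is guaranteed only in expectation (or for typical realizations) --- is real, but you flag it honestly, and the expectation is the right object since $\bm{\varepsilon}_{ij}$ is resampled at every batch; note the paper's own lemma likewise leans on the noise being able to take arbitrary values, so neither proof is realization-free.
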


\begin{proof}[Proof of Theorem \ref{thm:2}]

To find a stationary point, we again apply partial derivative to $\mathcal{L}$. Due to the absolute value in $\mathcal{B}_i$, the partial derivative becomes

\begin{widetext}
\begin{equation}
  \frac{\partial}{\partial F_{\theta, ij}} |\mathcal{B}_i| = \frac{\partial}{\partial F_{\theta, ij}} \sqrt{\mathcal{B}_i^2} = \text{sgn}(\mathcal{B}_i) \frac{\partial \mathcal{B}_i}{\partial F_{\theta, ij}} = -\text{sgn}(\mathcal{B}_i)\left[\frac{\partial}{\partial F_{\theta, ij}}\textrm{Var}(F_{\theta}({x}_{ij}+{\varepsilon}_{ij})) \right]. \label{eq:5}
\end{equation}
\end{widetext}

By expanding above with Taylor expansion, we get

\begin{widetext}
\begin{align}
  -\text{sgn}(\mathcal{B}_i)\left[\frac{\partial}{\partial F_{\theta, ij}}\textrm{Var}(F_{\theta}({x}_{ij}+{\varepsilon}_{ij})) \right] &= -\text{sgn}(\mathcal{B}_i)\left[\frac{\partial}{\partial F_{\theta, ij}}\textrm{Var}(F_{\theta, ij} + \nabla F_{\theta, ij}\varepsilon_{ij} + \mathcal{O}(\varepsilon_{ij}^2)) \right] \\ \nonumber
  & = -\text{sgn}(\mathcal{B}_i)\left[\frac{1}{M} \frac{\partial}{\partial F_{\theta, ij}} \sum_j (F_{\theta, ij} + \nabla F_{\theta, ij}\varepsilon_{ij} - \mu_i^{F\nabla})^2 \right] \\ \nonumber
  & = -\text{sgn}(\mathcal{B}_i)\left[\frac{1}{M} \sum_j 2(F_{\theta, ij} + \nabla F_{\theta, ij}\varepsilon_{ij} - \mu_i^{F\nabla}) \frac{\partial (F_{\theta, ij} + \nabla F_{\theta, ij}\varepsilon_{ij} - \mu_i^{F\nabla})}{\partial F_{\theta, ij}} \right] \\ \nonumber
  & = -\text{sgn}(\mathcal{B}_i)\left[\frac{2}{M}(F_{\theta, ij} + \nabla F_{\theta, ij}\varepsilon_{ij} - \mu_i^{F\nabla}) - \frac{2}{M^2} \sum_j F_{\theta, ij} + \nabla F_{\theta, ij}\varepsilon_{ij} - \mu_i^{F\nabla} \right] \\ \nonumber
  & = -\text{sgn}(\mathcal{B}_i)\frac{2}{M}(F_{\theta, ij} + \nabla F_{\theta, ij}\varepsilon_{ij} - \mu_i^{F\nabla}) \label{eq:6}
\end{align}
\end{widetext}

where $\frac{\partial \nabla F_{\theta, ij}\varepsilon_{ij}}{\partial F_{\theta, ij}} = 0$, and $\mu_i^{F\nabla} = \frac{1}{M}\sum_j F_{\theta, ij} + \nabla F_{\theta, ij}\varepsilon_{ij}$.

Combining two terms, the entire partial derivative has two cases depend on the sign of $\mathcal{B}_i$.

\begin{widetext}
\begin{align}
  \frac{\partial \mathcal{L}}{\partial F_{\theta, ij}} & = \begin{cases}
    \frac{2}{M} \left[(F_{\theta, ij} - \mu_i^F) - (F_{\theta, ij} + \nabla F_{\theta, ij}\varepsilon_{ij} - \mu_i^{F\nabla}) \right] & \text{if} \quad \text{sgn}(\mathcal{B}_i) = 1 \\
    \frac{2}{M} \left[(F_{\theta, ij} - \mu_i^F) + (F_{\theta, ij} + \nabla F_{\theta, ij}\varepsilon_{ij} - \mu_i^{F\nabla}) \right] & \text{if} \quad \text{sgn}(\mathcal{B}_i) = -1. \end{cases} \\ \nonumber
    & = \begin{cases}
      \underbrace{\frac{2}{M} \left[(\mu_i^{\nabla} - \nabla F_{\theta, ij}\varepsilon_{ij}) \right]}_{\mathcal{C}_i} & \text{if} \quad \text{sgn}(\mathcal{B}_i) = 1 \\
      \underbrace{\frac{2}{M} \left[(2F_{\theta, ij} + \nabla F_{\theta, ij}\varepsilon_{ij} - (\mu_i^{F\nabla} + \mu_i^{F}) \right]}_{\mathcal{D}_i} & \text{if} \quad \text{sgn}(\mathcal{B}_i) = -1. \end{cases}
\end{align}
\end{widetext}

where $\mu_i^{\nabla} = \frac{1}{M}\sum_j \nabla F_{\theta, ij}\varepsilon_{ij}$. When the noised output variance is greater than $Q$ ($\mathcal{C}_i$), both term cooperates to reduce the variance of $F_\theta$, regardless of the input. But when the noised output variance became smaller than $Q$, the terms that contributed cooperation cancel out, and the functional now has a global maximum instead of a global minimum. In this regime, the constant function becomes the only solution as follows. 

\begin{lemma} A constant function $F_{\text{const}}(x) = {C}_0$ for all $x \in \mathbbm{R}^d$ and some constant ${C}_0$ is a global maximum of $\mathcal{L}_i$ when $\text{sgn}(\mathcal{B})=1$. \label{lmm:1}
\end{lemma}
\begin{proof}[Proof of Theorem \ref{lmm:1}]
  First, $\mathcal{C}_i$ has its critical point when every $\nabla F_{\theta, ij}\varepsilon_{ij}$ becomes a constant $\mu_i^{\nabla}$. This is a global maximum since any deviation from the constant will decrease $(\mu_i^{\nabla} - \nabla F_{\theta, ij}\varepsilon_{ij})$ as second derivative test shows.

  \begin{align}
    \frac{\partial^2 \mathcal{L}_{i, \text{sgn}(\mathcal{B}_i)=1}}{\partial \nabla F_{\theta, ij}\varepsilon_{ij}2} & = \frac{\partial} {\partial \nabla F_{\theta, ij}\varepsilon_{ij}} \frac{2}{M}(\mu_i^\nabla - \nabla F_{\theta, ij}\varepsilon_{ij}) \\ \nonumber 
    & = \frac{2}{M}\left(\frac{1}{M}-1\right) < 0 \label{eq:4}
  \end{align}

  Since the component of noise vector $\varepsilon_{ij}$ can have an arbitrary value, the only way to satisfy the condition for the global maximum is that $\nabla F_{\theta, ij} = \vec{\bm{0}}$, which means that the function is a constant at everywhere. 
\end{proof}

Now, suppose that the constant function is one of a minimum of $\mathcal{L}_i$. Then, it can only exist at the region where $\text{sgn}(\mathcal{B}_i)=-1$ to Lemma \ref{lmm:1}. But, the constant function always yields $\text{sgn}(\mathcal{B}_i)=1$ since $\mathcal{B}_i = Q - \textrm{Var}(F_{\theta}({x}_{ij}+{\varepsilon}_{ij})) = Q - 0 = Q > 0$. This is a contradictory, and hence the constant function can't be a minimum of the noise-variance loss.
\end{proof}

Intuitively, the noise-variance loss prevents trivial convergence by keeping gradient of $F_{\theta}$ to have a non-zero value, which cannot be accomplished by the constant function. 

In the main manuscript, we described the physical implication of this proof with a view of Hamiltonian mechanics. It should be noted that the analysis in the main manuscript does not restrict our model's possible application to the Hamiltonian system; The very idea of prohibiting zero gradients of the model output with spreading loss is valid for virtually any (non-trivial) invariant function and can be generalized to the system where explicit Hamiltonian is yet to known or undefined, as our results for synthetic systems show.

\begin{figure}
  \centering
  \includegraphics[width=\linewidth]{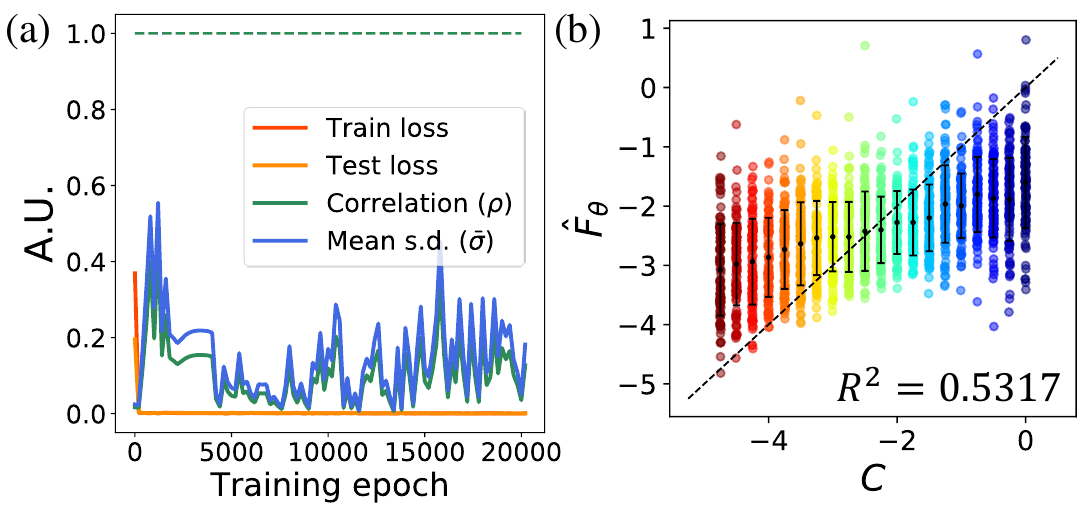}
  \caption{Model performances of ConservNet where the model is trained by $\mathcal{L}_{\text{simple}}$. (a) Train loss, test loss, correlation with ground-truth invariant ($\rho$), and mean intra-group standard deviation ($\bar{\sigma}$) while training $20,000$ epochs of ConservNet for invariant $S2(20, 100)$. Green dashed line indicates ideal correlation value $1$.(b) Ground-truth invariants $C$ versus fitted ConservNet outputs $\hat{F}_\theta = aF_\theta+b$ for the invariant $S2$ is plotted under data condition $(20, 100)$ with the $R^2$ value. Here, $(a,b)=(8.88 \times 10^{-5}, 0.92 \times 10^{-4}$).}\label{fig:5}
\end{figure}

\section{Dataset Detail}

\renewcommand{\arraystretch}{1.3}
\begin{table*}\centering
  \caption{Variable range and sample distribution of train and test data for each system. Variables with a star($*$) at the distribution column are calculated after other variables was drawn from the sample distribution, while dash($-$) indicates a non-samplable variable. For physical systems (Lotka-Volterra and Kepler problem), the distribution column indicates each variable's initial distribution (and hence differs from the actual range as shown). In remarks, the term \textit{rescaled} means the variables are normalized by a given factor before constructing the dataset for the model training.}
  \begin{ruledtabular}
    \begin{tabular}{lcccccc}
      Invariant & System formula & Variable & Distribution & Actual range & Remarks \\ \hline
      \multirow{5}{*}{S1} & \multirow{5}{*}{$C = x_1 - 3x_2x_3 +\frac{1}{2}x_4^2$}&$x_1$ &$*$& $[-4.99, 4.99]$ & \multirow{5}{*}{Model invariant} \\
      & &$x_2$& $\sim \mathcal{N}(0, 2)$ &$[-6.73, 6.90]$  & \\
      & &$x_3$&$\sim \mathcal{N}(0, 2)$& $[-6.39, 5.60]$ & \\
      & &$x_{4} $ & $\sim \mathcal{N}(0, 2)$ & $[-5.41, 5.70]$ & \\
      & &$C$ & - & $[-4.5, 5.0]$  & \\ \hline
      \multirow{4}{*}{S2} & \multirow{4}{*}{$C = 3x_1 + 2\sin(x_2) + \sqrt{|x_1|}x_3^3$ }&$x_1$ &$\sim \mathcal{U}(-3, 3)$& $[-2.99, 0.52]$ & \multirow{4}{*}{Model invariant}\\
      & &$x_2$& $*$ &$[-9.42, 9.42]$ & \\
      & &$x_3$& $\sim \mathcal{U}(-3, 3)$& $[-2.89, 2.90]$ & \\
      & &$C$ & - & $[-5.0, 0.0]$ & \\ \hline
      \multirow{5}{*}{S3} & \multirow{5}{*}{$C = 2x_1x_2 - (\ln(|x_1+x_3|)-x_4)/x_3$ }&$x_1$ &$*$& $[-9.94, 9.99]$ & \multirow{5}{*}{Model invariant}\\
      & &$x_2$ &$\sim \mathcal{U}(-10, 10)$& $[-9.93, 9.98]$ & \\
      & &$x_3$& $\sim \mathcal{U}(0.5, 5)$& $[0.50, 4.99]$ & \\
      & &$x_4$& $\sim \mathcal{U}(-10, 10)$& $[-9.92, 9.98]$ & \\
      & &$C$ & - & $[1.0, 3.85]$ \\ \hline
      \multirow{3}{*}{Lotka-Volterra} & &$x_1$ &$\sim \mathcal{U}(1, 10)$& $[0.004, 25.53]$ & \multirow{3}{*}{$x_1, x_2$ rescaled ($0.1$)}\\
      & $C = \alpha\ln(x) + \delta\ln(y) - \beta x - \gamma y $ &$x_2$ &$ \sim \mathcal{U}(1, 10) $& $[0.03, 9.77]$ & \\
      &  &$C$ & - & $[-1.24, 0.12]$ \\ \hline
      \multirow{5}{*}{Kepler problem} & &$x$ &$*$& $[-5, 5]$ & \\
      & &$y$ &$\sim \mathcal{U}(-5, 5)$& $[-9.74, 15.28]$ & Eccentricity $e<0.99$\\
      &  $C_1 = xv_y - yv_x$ &$v_x$ &$\sim \mathcal{U}(-5, 5)$& $[-1.49, 1.08]$ & $x, y$ rescaled ($0.1$)\\
      & &$v_y$ &$\sim \mathcal{U}(-5, 5)$& $[-1.43, 1.39]$ & \\
      & &$C_1$ & - & $[1.0, 3.85]$ \\ \hline
      \multirow{4}{*}{Double Pendulum} & &$\theta_1$ & - & $[-1.39, 1.42]$ & \\
      & $C_{\text{ideal}} = L_1^2(m_1+m_2)\omega^2 + m_2L_2^2\omega^2$ &$\theta_2$ & - & $[-2.14, 2.15]$ & \multirow{2}{*}{$\omega_1, \omega_2$ rescaled ($0.1$)}\\
      & $ + 2m_1m_2L_1L_2\omega_1\omega_2\cos(\theta_1-\theta_2) $ &$\omega_1$ & - & $[-10.60, 10.46]$ & \\
      & $ - 2gL_1(m_1+m_2)\cos(\theta_1)-2gm_2L_2\cos(\theta_2)$ &$\omega_2$ & - & $[-21.21, 21.37]$ & 
      \label{table:1}
    \end{tabular}
  \end{ruledtabular}
\end{table*} 

We prepare total $6$ systems for training; $S1$, $S2$, $S3$, Lotka-Volterra system, Kepler problem, and observation data from the real double pendulum. Table \ref{table:1} shows the exact range and sample distribution of each variable in every model system.
We find that normalizing the data to match the scale between variables improves overall performances, and thus variables with maximum values exceeding $10$ are rescaled by a factor of $0.1$. Rescaling inputs also encourages the model output to be more linear with the true invariant; although unsupervised neural models can learn an arbitrary function of invariant $h(C(x))$, the output can still be linearized as $h(C(x)) = aC(x)+b$ with constants $a$ and $b$ if the output range is restricted to a small region.

\subsection{Synthetic system}
Dataset is composed by first randomly draw the relative variables except for the final one, and calculate the last relative variable which preserves the overall conserved quantity. We tried to maximize the variety of simulated data by setting noise distribution and variable range for each system as different as possible. While producing the dataset, we restricted the absolute value of the output of the final variable which consequently rejected some perilous set of variables that forces the last variable into an extremely diverging value.

\subsection{Physical system}

We generate the data from physical systems by integrating respective differential equations with Euler's method and performed subsampling to the trajectory. For the Lotka-Volterra system, we simulate the dynamics for $100M$ steps for the dataset of batch size $M$ with time interval $dt=0.01$. The obtained data are further subsampled for every $100$ steps, effectively setting the time interval between data points to $1$. We scale $x, y$ in the Lotka-Volterra equation and position coordinate $x, y$ in Kepler's system by a factor of $0.1$.

\subsection{Real double pendulum}
Double pendulum data is adopted from \cite{schmidt2009distilling}, where two trials for double pendulum data are available from the provided dataset in the Supplementary dataset. We use first trial, consists of $818$ data points with four-dimensional time-series $(\theta_1, \theta_2, \omega_1, \omega_2)$. Each data point corresponds to $0.01s$, making the total data length to $8.18$ seconds. The training set consist of the first $654 = 818 \times 0.8$ points and the test set consist of the rest $154 = 818 \times 0.2$ points. We scale $\omega_1, \omega_2$ by a factor of $0.1$ to match the scale with $\theta_1$ and $\theta_2$.

\section{Training Detail}
Both SNN and ConservNet are composed of $6$ layers of multi-layer perceptrons (MLP) with a layer width of $320$, where its input dimension varies by the target system and has a single output neuron. Note that the original SNN \cite{iten2020discovering} used $2$ layers of MLP and a layer width of $160$, and we found that increasing layer depth and width generally increases overall performances for both SNN and ConservNet. 

While training, we found the training result of SNN significantly varies by random initialization and highly prone to overfit. We report that SNN showing a good performance (train, test accuracy of $100\%, 99.75\%$ with correlation $0.998$) in one trial while converges to a meaningless output and strongly overfits (train, test accuracy of $100\%, 50.03\%$ with correlation $0.154$) in the very next trial with the same conditions. In several trials, larger layer width ($320$) exhibits strong overfitting while a relatively shallow one ($160$) shows better generalization performances.

For a fair comparison, we tested every combination of learning rates $[0.005, 0.0005, 0.00005]$ and layer width $[160, 320]$, and reported the best performing one (in terms of test accuracy) among $5$ trials for each condition as a representative result of SNN. In the case of ConservNet, we fixed the learning rate as $0.00005$ and layer width as $320$ neurons since the performance was robust to both layer width and learning rate. We trained both SNN and ConservNet for $50,000$ epochs with early stopping, Adam\cite{kingma2014adam} optimizer, and no particular regularizer. Batch size for mini-batch training is fixed to $64$ for SNN and tentative for Conservnet, where its batch size is fixed as respective group size $M$. Training takes several hours on a single GeForce GTX 1080, depends on the batch size.

\section{Extraction of symbolic formula from ConservNet results}

In the real scenario, identifying the explicit functional form of the invariant rather than just a numerical value of the invariant is often crucial for understanding the system and its inherent symmetry. This can be done by performing polynomial or symbolic regression to the ConserveNet output as mentioned in the main manuscript, but many of these regression methods are prone to overfitting if the data is errorneous. Hence, beside its usefulness, successfull retreival of symbolic function from data also indicates the high quality of the model output.
 As an exemplary case, we perform ridge regression with polynomial features of the input data on the output of ConservNet for invariant $S1$. The result of the regression for order $2$ is 

\begin{align}
  F_\theta &= 0.3069x_1 + 0.0005x_2 - 0.0019x_3 + 0.0035x_4 \\  \nonumber &+ 0.0005x_1^2 + 0.0006x_1x_2 - 0.0006x_1x_3 + 0.0002x_1x_4 \\ \nonumber &- 0.0006x_2^2 - 0.921x_2x_3 - 0.0002x_2x_4 + 0.0008x_3^2 \\  \nonumber &+ 0.0004x_3x_4 + 0.1528x_4^2 \\  \nonumber &\approx 0.3x_1 - 0.9x_2x_3 + 0.15x_4^2   \label{eq:S1}
\end{align}

We can see that the approximated output is the same as $\frac{3}{10}S1$, showing that ConservNet can provide reliable output for the extraction of symbolic formula.

\section{ConservNet on a system with multiple invariants}

\begin{figure}
  \centering
  \includegraphics[width=0.6\linewidth]{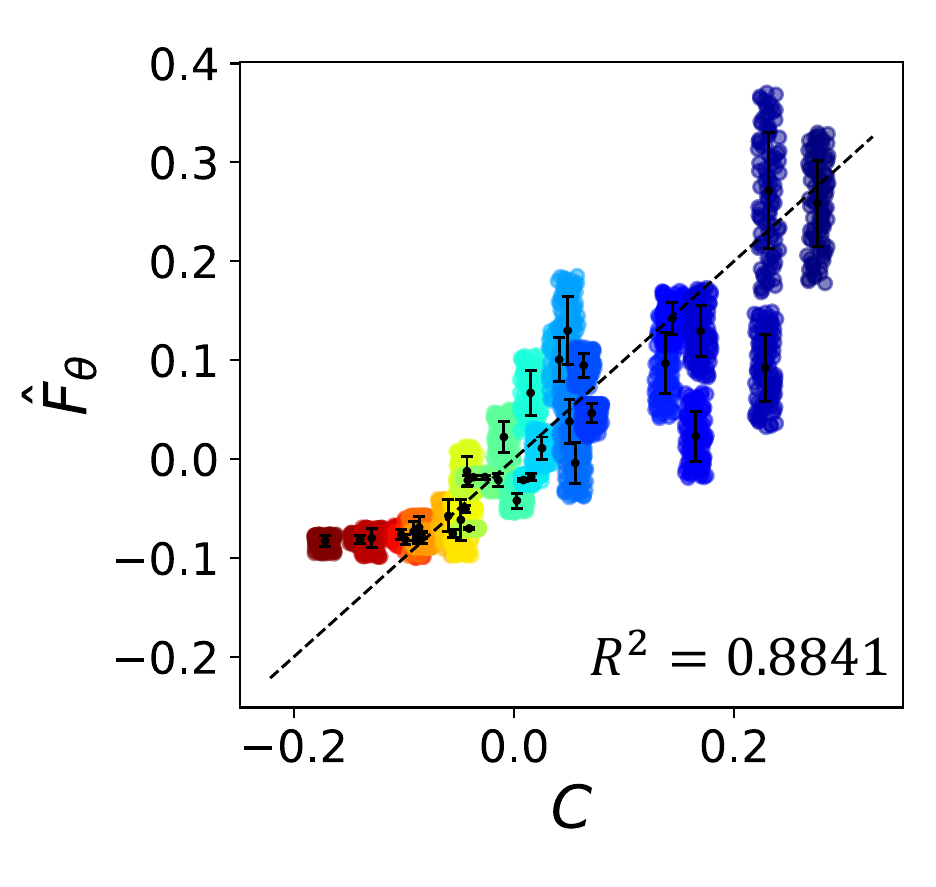}
  \caption{Ground-truth invariants $C$, the total energy, versus fitted ConservNet outputs $\hat{F}_\theta = aF_\theta+b$ for Kepler problem is plotted under data condition $(20, 100)$ with $R^2$ value, with controlled dataset where angular momentum over the dataset is fixed to $L=-1.5$. Here, $(a,b)=(-2.61, -0.29)$.}\label{fig:6}
\end{figure}

In the main manuscript, we show that ConservNet discovers the angular momentum from the Kepler problem among possible three invariants. Since the model is designed to output a single value from a single output, jointly finding multiple invariants needs a modification to the current architecture. In \cite{wetzel2020discovering}, the authors verified their model (SNN) by fixing the angular momentum of the given dataset and perform the same training. We test our model with similar settings with the controlled dataset to examine whether the model can discover the second invariant. Fig. \ref{fig:6} shows that the model output shows a strong correlation with the second invariant, the total energy of the orbital system from the controlled dataset. One possible reason for slightly worse performance compared to the case of angular momentum, which coincides with the result of \cite{wetzel2020discovering}, might be a massive scale difference between angular momentum and total energy in the dataset. For a stable periodic orbit in our simulation for the Kepler problem, we find that angular ranges from $-2$ to $2$ while the total energy ranges from $-0.2$ to $0.2$, approximately $10$ times smaller than the angular momentum. Since NV loss effectively increases gradient of the model output, function with larger gradient value might be more preferable to the model. Finding a particular modification of ConservNet for simultaneous discoveries of multiple invariants would be an interesting future direction.

\section{Robustness of ConservNet performances in various conditions}

\subsection{Results on various data conditions}

\begin{figure}
  \centering
  \includegraphics[width=0.6\linewidth]{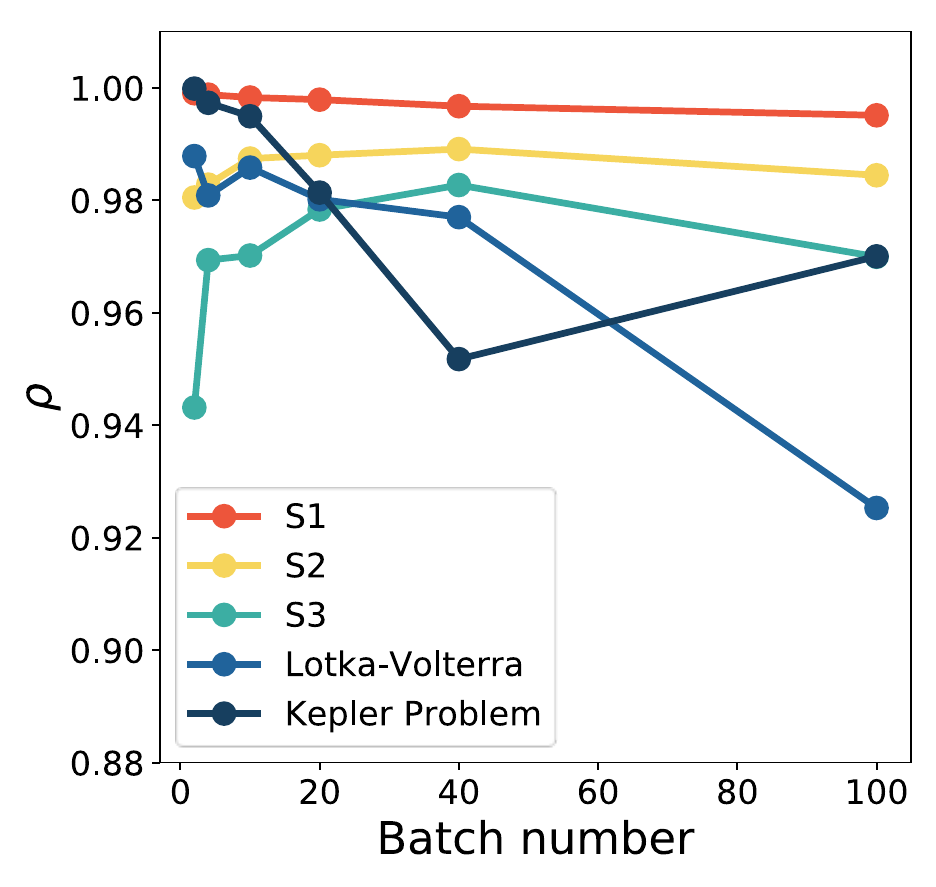}
  \caption{Pearson corrleation $\rho$ with varying data condition while training $50,000$ spochs of ConservNet for the invariants. Here, the total data number is fixed to $2,000$ and hence the data conditions of $(2, 1000), (4, 500), (10, 200), (20, 100), (40, 50)$ and $(100, 20)$ are tested.}\label{fig:7}
\end{figure}

In the main manuscript, we fixed all of the simulated data condition to $(20,100)$. In this section, we further test different data conditions with different batch numbers and batch sizes while fixing the total number of data points to $2,000$. In Fig. \ref{fig:7}, the correlation $\rho$ for all five simulated systems with different data conditions are plotted. We can confirm that ConservNet shows good performances ($\rho>0.9$) for all simulated settings, include both extreme ends; from the points where only a single, long dataset is possible to the points where a hundred of different trials with a short period of observation was recorded.

\subsection{Hyperparameters and spreader selection}

\begin{figure}
  \centering
  \includegraphics[width=0.6\linewidth]{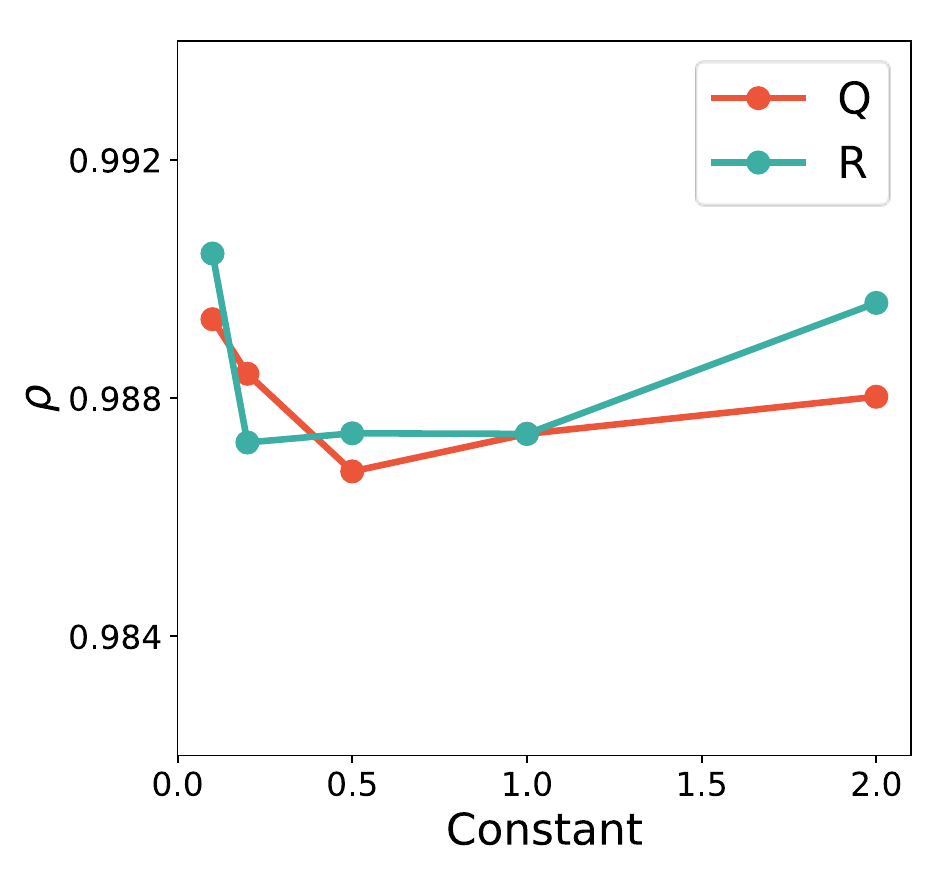}
  \caption{Pearson corrleation $\rho$ with varying spreading constant $Q$ (fixed $R=1$) and max noise norm $R$ (fixed $Q=1$) while training $20,000$ spochs of ConservNet for the invariant $S2$.}\label{fig:8}
\end{figure}

\begin{figure}
  \centering
  \includegraphics[width=0.6\linewidth]{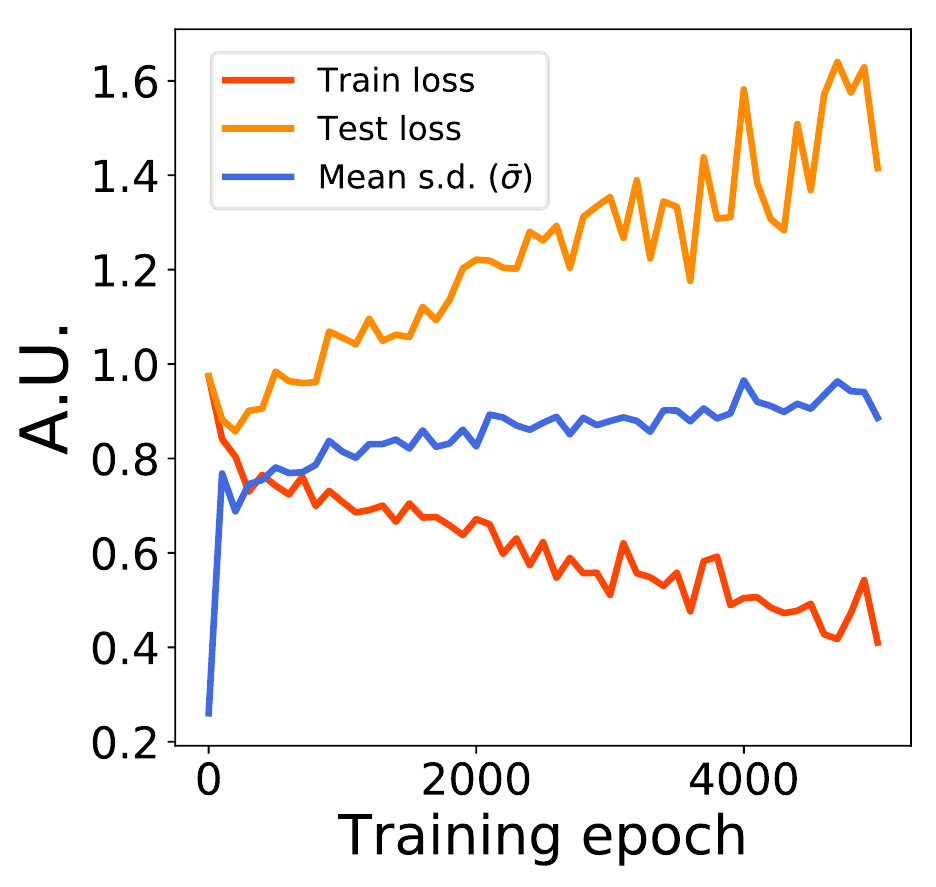}
  \caption{Train loss, test loss, and mean intra-group standard deviation ($\bar{\sigma}$) while training $5,000$ epochs of ConservNet for the random data.}\label{fig:9}
\end{figure}

In the main manuscript, we use $Q=1$ and $R=1$ for all experiments. But in our NV loss, the relative magnitude between $Q$ and $R$ determines the strength of the spreader, and one may raise a question about the relationship between such specific choice of hyperparameters and the model performance. To test the robustness of ConservNet with hyperparameters, we use $S2(20,100)$ as a test invariant and record the Pearson correlation $\rho$ while varying $Q$ and $R$ values. The results are shown in Fig. \ref{fig:8}, which verifies that the model performances are practically unaffected by the choice of specific hyperparameters.

We also test different types of spreaders by restraining the noise with different norms. Instead of $L_2$-norm we use in the main experiment, we test $L_1$-norm and $L_{\infty}$-norm to be restrained to $1$. As a result, both spreaders achieve comparable results of $0.9910$ and $0.9862$ for $S2(20,100)$. To sum up, ConservNet is robust with both hyperparameters selection and spreader type, implying that the spreader can be freely constructed as long as it serves the main purpose; preventing the model from trivial convergence.

\subsection{System with no specific invariant}

If the system has no specific invariant, a good model for invariant discovery should notice such absence. We train ConservNet for the random data consists of $5$-dimensional Gaussian random vector $X \sim \mathcal{N}(\mathbf{\mu}, \mathbf{\Sigma})$, where $\mathbf{\mu} \in \mathbb{R}^5$ and $\Sigma = I_5$. ConservNet alarms the absence of invariant by showing strong overfitting and large intra-group deviations, as shown in Fig. \ref{fig:9}.

\bibliography{ConservNet}

\end{document}